\documentclass[11pt]{article}
\usepackage[margin=1in]{geometry}
\usepackage{amsmath,amssymb,amsthm}
\usepackage{booktabs}
\usepackage{graphicx}
\usepackage[hidelinks]{hyperref}
\usepackage{microtype}

\newtheorem{proposition}{Proposition}
\newtheorem{corollary}{Corollary}
\newcommand{\sgn}{\operatorname{sign}}
\newcommand{\diag}{\operatorname{diag}}

\title{ExTernD: Expanded-Rank Ternary Decomposition\\
\large Ternary LLM PTQ with Accuracy Approaching Any Quantization Level}
\author{Chethan Reddy G.P.\\ \texttt{chethanreddygp@proton.me}}
\date{July 2026 (early preprint)}

\begin{document}
\maketitle

\begin{abstract}
We introduce \textbf{ExTernD} (Expanded-rank Ternary Decomposition), a post-training factorization of each LLM weight matrix $A \in \mathbb{R}^{m\times n}$ into $A \approx B\,\diag(D)\,C$ with ternary factors $B \in \{-1,0,+1\}^{m\times k}$, $C \in \{-1,0,+1\}^{k\times n}$ and a real scale vector $D \in \mathbb{R}^k$. The \emph{inner rank} $k = \mu\min(m,n)$ is deliberately expanded beyond full rank ($\mu > 1$), so that components past full rank correct the quantization error of earlier ones. We prove the residual decreases monotonically in $k$ and can be driven below any $\varepsilon > 0$: ExTernD approaches bf16 accuracy arbitrarily closely, which no ternary scheme with a fixed plane count can do. Memory and compute scale continuously with $\mu$, and factor sparsity continuously with a threshold $\tau$, so an accuracy target is hit exactly rather than rounded to the next bit-width. ExTernD matches Q4\_K's per-matrix accuracy at 5.2--5.5 effective bpw (5.1--5.5 with importance weighting) on Gemma-4-E2B and Qwen3.5-4B, and a full Qwen3.5-4B conversion at $\mu = 3$ reaches 10.10 wikitext-2 perplexity against 9.78 for bf16 ($+3.2\%$), placing it near the Q4\_K/Q5\_K accuracy band at ${\sim}5.7$ effective bpw.
\end{abstract}

\section{Introduction}

Ternary weights are attractive: matrix multiplication degenerates into additions and subtractions, and storage approaches $\log_2 3 \approx 1.58$ bits per weight. BitNet \cite{bitnet,bitnet158} shows that 1-bit and 1.58-bit LLMs can be \emph{trained from scratch}; ternary weight networks \cite{twn} and trained ternary quantization \cite{ttq} established the format earlier for vision models. But a single ternary plane carries at most 1.58 bits of information per weight, so \emph{post-training} projection of a full-precision matrix onto one ternary plane loses too much: reasoning collapses \cite{ptqtp}.

PTQTP \cite{ptqtp} takes the important step of using \emph{two} trit-planes, a superposition of two ternary matrices with row-wise scales, and shows this restores much of the lost expressiveness without retraining. But two planes is still a fixed capacity budget with a ceiling it cannot get past, and there is nothing special about two, or about any fixed number of planes. ExTernD replaces a fixed count of full-size planes with a \emph{factored} representation whose inner dimension $k$ is a free parameter: $A \approx B\,\diag(D)\,C$ with both factors ternary. Setting $k = \mu\min(m,n)$ (typically $\mu = 2$--$3.5$, but unbounded) gives the representation as much capacity as needed; each additional component fits the residual left by all previous ones, so capacity is spent exactly where the ternary constraint hurt and accuracy is not capped (Sec.~\ref{sec:core}, proof in Appendix~\ref{app:proof}). All of this is strictly post-training: the factorization sees one weight matrix at a time and never a gradient.

Mainstream post-training quantization (GPTQ \cite{gptq}, AWQ \cite{awq}, llama.cpp k-quants) sits at a small number of discrete accuracy/cost points: 3, 4, 5 bits per weight. Because $k$ is a real-valued rank rather than a bit-width, ExTernD's accuracy/cost trade-off is instead continuous: the inner-rank multiplier $\mu$ (storage and compute both scale linearly with it, Sec.~\ref{sec:cost}) and a sparsity threshold $\tau$ (which sets the fraction of zeros in $B$ and $C$, e.g.\ $\tau{=}0.7 \to {\sim}41\%$ zeros, $\tau{=}1.0 \to {\sim}57\%$, $\tau{=}2.0 \to {\sim}87\%$) can each be set per matrix to any real value, letting a target accuracy be hit exactly rather than rounded up to the next bit-width.

Contributions: (i) ExTernD and a greedy ALS algorithm for it, with a proof (Appendix~\ref{app:proof}) that its error falls below any $\varepsilon > 0$ as $\mu$ grows, making it \textbf{to our knowledge the only ternary quantization that can provably reach any target accuracy}, up to bf16, and hence beat any fixed-plane-count method at large enough $\mu$; (ii) a batched block-ALS GPU algorithm with decorrelated targets, matching sequential quality at ${\sim}16\times$ speed; (iii) an importance-weighted variant using llama.cpp imatrix statistics, including a non-obvious correction to the V-step; (iv) an empirical study on Gemma-4-E2B, Qwen3.5-4B and Granite-4.0-h-tiny, including a full end-to-end model conversion.

\section{Method}

\subsection{Expanded inner rank}
\label{sec:core}

Given $A \in \mathbb{R}^{m\times n}$, we seek
\begin{equation}
A \;\approx\; \hat A \;=\; B\,\diag(D)\,C, \qquad B \in \{-1,0,1\}^{m\times k},\; C \in \{-1,0,1\}^{k\times n},\; D \in \mathbb{R}^k .
\end{equation}
Quality is reported as \textbf{energy preserved}, $E = 1 - \|A - \hat A\|_F^2 / \|A\|_F^2$.

With real factors, $k = \min(m,n)$ suffices (SVD). With ternary factors each rank-one component $d_i\, b_i c_i^\top$ is a crude, quantized object; at $k = \min(m,n)$ the representation hits an information ceiling well short of full precision. The core idea is to keep adding components \emph{past} full rank: component $i$ is fitted to the residual $R = A - \sum_{j<i} d_j b_j c_j^\top$, i.e.\ to the accumulated quantization error of its predecessors. The multiplier $\mu = k/\min(m,n)$ becomes the accuracy dial. Crucially this only works through \emph{sequential deflation}: solving for all $k > \min(m,n)$ components jointly is underdetermined and fails completely (0--50\% energy in our ablation, vs.\ 96--100\% with deflation).

Because $\mu$ is unbounded and the residual is monotone in it (Appendix~\ref{app:proof}), the accuracy ceiling that a fixed plane count imposes does not exist here: any target accuracy is reachable by construction rather than by luck of the fit.

\subsection{Ternarization operator and continuous sparsity}

For $u \in \mathbb{R}^m$, the adaptive mean-threshold ternarization with scale $\tau$ is
\begin{equation}
T_\tau(u)_i = \sgn(u_i)\cdot \mathbf{1}\!\Big[\,|u_i| > \tau \cdot \tfrac1m \textstyle\sum_j |u_j|\,\Big],
\end{equation}
keeping the single largest entry if the result would be all-zero. $\tau$ controls factor sparsity continuously and almost matrix-independently. The optimal ternary projection $\arg\min_{t,\alpha} \|u - \alpha t\|_2$ has a closed form (sort $|u|$, keep the top-$s^\star$ prefix maximizing $\frac{1}{\sqrt s}\sum_{i\le s}|u|_{(i)}$) but empirically ties $T_{0.7}$, so we use the cheaper threshold form.

\subsection{Greedy sequential ALS}
\label{sec:seq}

With residual $R \leftarrow A$, for $i = 1,\dots,k$:
\begin{enumerate}
\item \textbf{Alternating ternary fit} (15 iterations): $v \leftarrow T_\tau(R^\top u)$, \; $u \leftarrow T_\tau(R v)$.
\item \textbf{Optimal scale}: $d_i = \dfrac{u^\top R v}{\|u\|_2^2\,\|v\|_2^2}$, the least-squares minimizer of $\|R - d\,uv^\top\|_F$.
\item \textbf{Deflate}: $R \leftarrow R - d_i\,uv^\top$; store $B_{:,i} = u$, $C_{i,:} = v^\top$.
\end{enumerate}
Step 2 guarantees $\|R\|_F^2$ never increases (Appendix~\ref{app:proof}), which is what makes $\mu$ a well-behaved dial: more components monotonically means less error, all the way to exact recovery.

\subsection{Batched block ALS (GPU)}
\label{sec:batched}

Extracting components in blocks of $b$ (default 256, capped at $\lfloor\min(m,n)/8\rfloor$) turns per-component vector work into block matmuls. The naive batching, ternarizing $R^\top U$ column-wise, collapses (99.5\% $\to$ 50\% energy at $b{=}256$): every column chases the same dominant residual direction. The fix is to ternarize the \emph{jointly decorrelated least-squares target}:
\begin{equation}
V \leftarrow T_\tau\!\Big(\big[(U^\top U + \epsilon I)^{-1} U^\top R\big]^\top\Big), \qquad
U \leftarrow T_\tau\!\Big(R\,V\,(V^\top V + \epsilon I)^{-1}\Big),
\end{equation}
so each component is fitted against the residual minus its block-mates' contributions (implicit within-block deflation; an exact block solve in place of Gauss--Seidel). After the alternating iterations, all $b$ scales are solved jointly and exactly:
\begin{equation}
\big[(U^\top U)\odot(V^\top V)\big]\, d = \diag(U^\top R V),
\end{equation}
then $R \leftarrow R - U\diag(d)V^\top$. Deflation \emph{across} blocks stays fully sequential. Block width must respect $b \lesssim \min(m,n)/8$ or the ternary Gram matrices become ill-conditioned.

\textbf{Refinement sweeps.} After extraction, revisiting each block (add its contribution back to $R$, re-run the block fit, re-deflate) improves energy monotonically: ${\approx}{+}0.4$ points on hard matrices after 10 sweeps, and a 4--8\% reduction in the multiplier needed for fixed energy (larger at high $\tau$).

\subsection{Importance-weighted ALS}
\label{sec:imatrix}

Frobenius error weights all input channels equally; activations do not. Given per-input-channel second moments $h$ from a llama.cpp importance matrix (imatrix), we minimize $\sum_{ij} \tilde h_j\,(A_{ij} - \hat A_{ij})^2$ with $\tilde h = h/\max(h) + \lambda$. The U-step and the joint scale solve generalize directly ($W = R \odot \tilde h$ as target, $V^\top\diag(\tilde h)V$ Gram matrices); they are the exact weighted normal equations. The V-step does \emph{not}: solving it against $W$ loses 0.3--1.8 points of weighted energy versus the unweighted algorithm, because the $\tilde h$ rescaling starves low-importance channels of support under a threshold that spans the whole component. In a column-weighted objective $\tilde h$ cancels in the V-step, so the correct V-target is the plain residual $R$. $\lambda$ interpolates continuously between the uniform ($\lambda \to \infty$) and pure-imatrix ($\lambda = 0$) objectives; $\lambda = 0$ matches what llama.cpp imatrix quants optimize and is stable in practice.

\subsection{Cost model}
\label{sec:cost}

Inference computes $y = B(\diag(D)(Cx))$: $k(m+n)$ ternary add/subtracts plus $k$ multiplies, versus $mn$ multiply-adds, a ratio of $\mu\,(m{+}n)/\max(m,n)$ that is linear (hence continuous) in $\mu$. With sparse mask+sign packing (a 1-bit zero/nonzero mask per stored element plus one sign bit per nonzero, $\mathrm{bpw} = 2 - \mathrm{sparsity}$), storage per \emph{original} weight is
\begin{equation}
\mathrm{bpw}_{\mathrm{eff}} = \mu \cdot \frac{m+n}{\max(m,n)} \cdot (2 - \mathrm{sparsity}),
\label{eq:bpw}
\end{equation}
again linear in $\mu$ and continuous in $\tau$ through the sparsity. This packing is near the entropy floor for unstructured supports (mask entropy $H(0.43) \approx 0.99$ bits; signs incompressible).

\section{Results}
\label{sec:results}

Setup: matrices from google/gemma-4-E2B, Qwen/Qwen3.5-4B, and IBM Granite-4.0-h-tiny. Every experiment in this paper (decompositions, multiplier searches, the full-model conversion, and all perplexity evaluations) was run on a single AMD MI50 32\,GB GPU (torch/ROCm). The Q4\_K baseline is a faithful torch port of llama.cpp's \texttt{quantize\_row\_q4\_K\_ref} (4.5 bpw).

\subsection{Energy vs.\ multiplier; batched = sequential}

\begin{table}[h]\centering\small
\begin{tabular}{lccc}
\toprule
Gemma-4-E2B matrix & $\mu=2$ & $\mu=2.5$ & $\mu=2.5$ + 10 sweeps \\
\midrule
up\_proj L30 (12288$\times$1536) & 96.18 & 98.32 & 98.75 \\
down\_proj L23 (1536$\times$12288) & 95.81 & 98.10 & 98.59 \\
down\_proj L10 (1536$\times$6144) & 97.52 & 99.00 & 99.19 \\
q\_proj L7 (2048$\times$1536) & 99.49 & 99.85 & 99.87 \\
k\_proj L6 (256$\times$1536) & 97.02 & 98.73 & 99.00 \\
\bottomrule
\end{tabular}
\caption{Energy preserved (\%) at $\tau=0.7$. Attention projections need much less inner rank than MLPs. The batched algorithm (Sec.~\ref{sec:batched}) matches the sequential one within $\pm 0.2$ points on every matrix while running the six-matrix suite in 10.5\,s vs.\ 170.9\,s (${\sim}16\times$).}
\label{tab:energy}
\end{table}

On Qwen3.5-4B at $\mu{=}2.5$, $\tau{=}0.7$, 10 sweeps: MLPs 99.21--99.26\%, v\_proj 99.52\%, o\_proj 99.78\%. Same algorithm, easier model; achievable energy at fixed $\mu$ is strongly model-dependent.

\subsection{The two dials interact favorably}

Sweeping $\tau$ at a fixed 99\%-energy target: sparsity rises continuously (41\% at $\tau{=}0.7$ to 87\% at $\tau{=}2.0$) while the required $\mu$ rises (up\_proj: 2.82 $\to$ 7.21). The \emph{nonzero budget} $\mu\times$density falls monotonically (1.65 $\to$ 0.87): sparser factors with more components need fewer total nonzeros for equal energy, up to a convergence cliff near $\tau = 2.5$. Under mask+sign packing (Eq.~\ref{eq:bpw}), where zeros also cost a mask bit, the optimum is interior at $\tau = 1.0$ ($\sim$57\% sparsity).

\subsection{Effective bits at matched Q4\_K accuracy}

Q4\_K achieves a strikingly uniform 99.4--99.5\% energy on every matrix of both models. Choosing $\mu$ per matrix to match it exactly:

\begin{table}[h]\centering\small
\begin{tabular}{lccc}
\toprule
 & iso-Q4\_K $\mu$ ($\tau{=}1.0$) & bpw$_{\mathrm{eff}}$ & Q4\_K \\
\midrule
Gemma-4-E2B (5 matrices) & 1.97--3.43 & 5.26--5.51 & 4.5 \\
Qwen3.5-4B (6 matrices) & 2.25--3.03 & 5.22--5.53 & 4.5 \\
\bottomrule
\end{tabular}
\caption{Effective bits per original weight at Q4\_K-matched energy, no sweeps. The ${\sim}20\%$ gap is uniform across shapes and \emph{models}: Qwen is easier in absolute energy, but Q4\_K improves by the same margin, so the gap is a property of the algorithm, not the test model. Attention consistently needs far less rank ($\mu \approx 2.0$--$2.7$) than MLPs ($\approx 3.0$--$3.4$).}
\label{tab:iso}
\end{table}

Because the packing is near the entropy floor, closing this gap is a \emph{multiplier} problem, not a coding problem: parity requires ${\sim}17\%$ lower iso-quality $\mu$. Refinement sweeps give ${\sim}5\%$; importance weighting (next) gives another ${\sim}4\%$ independently.

\subsection{Importance weighting}

With the corrected V-step (Sec.~\ref{sec:imatrix}) at $\lambda = 0$, weighting helps most where the imatrix is skewed: on Granite-4.0-h-tiny, v\_proj weighted energy 96.55\% $\to$ 98.95\% ($+2.40$); flat-imatrix tensors are unchanged. On Qwen3.5-4B, the rank multiplier needed for 99.5\% weighted energy drops 2--8\% on MLP gate/up and v\_proj, and 4.5--7.4\% on gated-DeltaNet projections, the best territory, since their qkv/out input importance is strongly skewed. Redoing the iso-Q4\_K accounting under the weighted metric: mean bpw$_{\mathrm{eff}}$ 5.49 $\to$ 5.26 ($-4.2\%$), best tensors 5.08--5.14, shrinking the Q4\_K gap from ${\sim}22\%$ to ${\sim}17\%$ overall and 13--15\% on the best tensors.

\subsection{End-to-end model conversion}

All 200 language-model linear layers of Qwen3.5-4B (96 MLP, 32 attention, 72 gated-DeltaNet) were decomposed with the weighted algorithm at a fixed $\mu = 3$, $\tau = 1.0$, $\lambda = 0$, no sweeps, in 19.9 minutes total on one GPU. Per-tensor energy: plain min/mean 98.52/99.45\%, weighted 99.34/99.65\%.

\begin{table}[h]\centering\small
\begin{tabular}{lccc}
\toprule
model & bpw & wikitext-2 PPL & $\Delta$ \\
\midrule
bf16 & 16 & 9.782 & n/a \\
Q5\_K\_M (tuned mix) & $\sim$5.7 & 9.882 & $+1.0\%$ \\
Q4\_K\_M (tuned mix) & $\sim$4.9 & 9.930 & $+1.5\%$ \\
Q5\_K pure + imatrix & 5.51 & 9.948 & $+1.7\%$ \\
Q4\_K pure + imatrix & 4.51 & 10.015 & $+2.4\%$ \\
\textbf{ternary decomposition, $\mu{=}3$} & $\sim$5.7 eff. & \textbf{10.099} & $\mathbf{+3.2\%}$ \\
\bottomrule
\end{tabular}
\caption{First full end-to-end validation (llama-perplexity, 580 chunks, same imatrix for all calibrated rows). The decomposed model is coherent; at matched bits and calibration it currently trails pure Q5\_K by $\sim$1.5 points of relative PPL. None of the known levers were applied to this conversion: refinement sweeps, per-matrix $\mu$ allocation (iso-quality $\mu$ spreads 2.3--3.0 across tensors while this run used a flat $\mu = 3$), or $\tau/\lambda$ tuned on PPL.}
\label{tab:ppl}
\end{table}

The pure ladder also shows that the tuned Q4\_K\_M mix beats pure Q5\_K despite fewer bits: per-tensor budget allocation is worth more than 0.6 uniform bpw for k-quants, and per-matrix $\mu$ allocation, trivially expressible here because $\mu$ is continuous, is the analogous untapped lever for the decomposition.

\subsection{Comparison with Fixed and Stacked Ternary Formats}

To isolate the structural capacity of ExTernD, we compare its weight-reconstruction energy against two recent post-training ternarization methods: PT2-LLM \cite{pt2llm}, which uses a fixed single ternary plane (${\sim}1.58$ bpw), and Progressive Trit-Plane Approximation (PTQTP) \cite{ptqtp}, which uses a dense superposition of $K$ ternary planes. To match bit budgets precisely, we evaluate PTQTP at group size $G=128$ and set the ExTernD component count to match the number of ternary elements ($k = K \frac{mn}{m+n}$).

\paragraph{The 1.58-bit Bottleneck.} PT2-LLM uses an iterative ternary fitting (ITF) algorithm to optimize a single ternary matrix with row-wise and block-wise scales. When applied purely to the weight matrix, it preserves only ${\sim}78\text{--}81\%$ of the original energy. This confirms our hypothesis: a fixed single ternary plane lacks the structural capacity to accurately represent a dense LLM weight matrix, forcing such methods to heavily rely on activation-aware scaling to mask the underlying quantization error. 

\paragraph{Scaling to Stacked Planes.} PTQTP addresses this bottleneck by stacking multiple dense ternary planes. At low bit-budgets ($K=2$, ${\sim}3.4$ bpw), PTQTP significantly outperforms ExTernD by 1.8--2.6 percentage points of energy preservation, as direct superposition preserves dense structures better than a rank-deficient ($\mu \approx 1.5$) factorization. However, as the bit budget scales to $K=4$ (${\sim}6.8$ bpw), ExTernD's inner rank expands toward and past the matrix's full rank ($\mu > 3$), structurally approaching perfect reconstruction. At this regime, ExTernD slightly edges out PTQTP in energy (${\sim}99.72\%$ vs $99.59\%$), while PTQTP hits diminishing returns due to its rigid stacked-discrete-value formulation.

\paragraph{The Sparsity Advantage.} Crucially, ExTernD provides a continuous sparsity threshold $\tau$. When forcing ExTernD to favor sparsity ($\tau=1.0$), it achieves a massive \textbf{57\% sparsity} compared to PTQTP's natural ${\sim}34\%$ sparsity and PT2-LLM's ${\sim}46\%$ sparsity, while maintaining highly competitive accuracy (only a ${\sim}0.1\%$ drop in energy compared to PTQTP at $K=4$).

\paragraph{Imatrix-Weighted Objective.} When minimizing the imatrix-weighted Frobenius error while strictly matching the number of ternary elements to PTQTP's $K=4$, PTQTP shifts its dense assignments to perfectly fit the most important channels, spiking its weighted energy to ${\sim}99.85\%$. Under this naive element-matching, ExTernD sits at a competitive ${\sim}99.58\%$ when constrained to $\tau=1.0$.

\paragraph{Iso-BPW Parity.} Matching the number of ternary elements penalizes ExTernD by ignoring its massive sparsity advantage. A fairer comparison matches the \emph{effective bits-per-weight (BPW)}, utilizing Shannon entropy to account for zero-density. PTQTP at $K=4$ ($\sim 33\%$ sparsity) requires $6.68$ effective BPW. When we increase ExTernD's inner rank $\mu$ to precisely match this $6.68$ BPW footprint, ExTernD achieves virtually identical energy preservation (${\sim}99.84\%$ at $\tau=1.0$, $56\%$ sparsity) while maintaining a significantly sparser and more hardware-friendly format (Table~\ref{tab:isobpw}). This decisively proves that ExTernD's factorization structurally matches or exceeds dense stacked planes when normalized for actual information capacity, eliminating the ceiling of fixed-width methods.

\begin{table}[h]\centering\small
\begin{tabular}{lccccc}
\toprule
& \multicolumn{1}{c}{Fixed} & \multicolumn{2}{c}{Stacked Planes (PTQTP)} & \multicolumn{2}{c}{Factorization (ExTernD, $K=4$ equiv.)} \\
\cmidrule(lr){2-2} \cmidrule(lr){3-4} \cmidrule(lr){5-6}
Matrix (Qwen3.5-4B) & PT2-LLM & $K=2$ & $K=4$ (Weighted) & $\tau=1.0$ (Weighted) & $\tau=0.7$ (Weighted) \\
\midrule
`L30.up\_proj`   & 81.63\% & 97.45\% & 99.85\% & 99.56\% & 99.70\% \\
`L4.up\_proj`    & 81.74\% & 97.51\% & 99.85\% & 99.56\% & 99.73\% \\
`L10.down\_proj` & 81.38\% & 97.22\% & 99.85\% & 99.60\% & 99.73\% \\
`L23.down\_proj` & 81.57\% & 97.41\% & 99.85\% & 99.58\% & 99.71\% \\
`L7.v\_proj`     & 81.14\% & 96.88\% & 99.85\% & 99.60\% & 99.73\% \\
`L31.o\_proj`    & 78.16\% & 97.23\% & 99.83\% & 99.56\% & 99.66\% \\
\midrule
Mean Sparsity    & $\sim$46\% & $\sim$33\% & $\sim$35\% & \textbf{$\sim$57\%} & $\sim$40\% \\
\bottomrule
\end{tabular}
\caption{Comparison of weight energy preservation and sparsity. PT2-LLM demonstrates the ${\sim}81\%$ hard limit of a single ternary plane. PTQTP scales accuracy by stacking dense planes, while ExTernD scales by expanding the inner rank of sparse ternary factors. At high capacity ($K=4$), ExTernD matches dense stacked accuracy while unlocking $57\%$ sparsity.}
\end{table}

\begin{table}[h]\centering\small
\begin{tabular}{lccc}
\toprule
& \multicolumn{1}{c}{Stacked Planes} & \multicolumn{2}{c}{Factorization (ExTernD, Iso-BPW)} \\
\cmidrule(lr){2-2} \cmidrule(lr){3-4}
Matrix (Qwen3.5-4B) & PTQTP $K=4$ ($6.68$ bpw) & $\tau=1.0$ ($56\%$ sparse) & $\tau=0.7$ ($40\%$ sparse) \\
\midrule
`L30.up\_proj`   & 99.85\% & 99.85\% ($\mu=3.6$) & 99.81\% ($\mu=3.3$) \\
`L4.up\_proj`    & 99.85\% & 99.84\% ($\mu=3.6$) & 99.80\% ($\mu=3.3$) \\
`L10.down\_proj` & 99.85\% & 99.86\% ($\mu=3.6$) & 99.82\% ($\mu=3.3$) \\
`L23.down\_proj` & 99.85\% & 99.85\% ($\mu=3.6$) & 99.81\% ($\mu=3.3$) \\
`L7.v\_proj`     & 99.85\% & 99.85\% ($\mu=3.3$) & 99.81\% ($\mu=3.0$) \\
`L31.o\_proj`    & 99.83\% & 99.83\% ($\mu=2.8$) & 99.78\% ($\mu=2.6$) \\
\midrule
Mean Energy      & 99.85\% & \textbf{99.84\%} & 99.80\% \\
\bottomrule
\end{tabular}
\caption{Iso-BPW Comparison. When allowing ExTernD to expand its inner rank $\mu$ to match the actual information-theoretic bit footprint (6.68 BPW) of PTQTP $K=4$, ExTernD achieves exact parity in imatrix-weighted energy preservation, despite being nearly twice as sparse ($56\%$ vs $33\%$).}
\label{tab:isobpw}
\end{table}

\section{Deployment notes}
\label{sec:deploy}

The factors store today in llama.cpp's TQ2\_0 (2.06 bpw; its per-block fp16 scales absorb $D$ into $C$'s row scales for free), and bitnet.cpp-style LUT kernels \cite{bitnetcpp} apply directly to both.

Emerging multiplication-free ASICs stand to exploit this the most, and the inner-rank expansion is designed with them in mind: the $3.4$--$6\times$ op-count growth reads as slowdown only on multiplier-based datapaths, where a ternary add buys nothing over a multiply--add.

\section{Limitations}

Everything here ran on a single AMD MI50 32\,GB GPU, which bounds the evaluation. We have not tested above 4B or on matrices larger than ${\sim}9216\times2560$, so the behaviour of $\mu$ at 30B--70B+ is unknown. Evaluation is perplexity-only: no downstream benchmarks, no KLD, no long-context checks. We compare only against fixed-capacity baselines at the tensor level, but full end-to-end evaluation against established ternary schemes \cite{ptqtp,bitnet158} is the most valuable missing experiment: our structural and representational claims against them are proven (Sec.~\ref{sec:core}), but end-to-end task accuracy is not yet measured. The known levers (sweeps ${\sim}5\%$, weighting ${\sim}4\%$, per-matrix rank allocation) are unstacked, and no fused kernels exist.

QAT is untried and is the obvious next lever: a short straight-through pass on an ExTernD initialization should recover accuracy, plausibly faster than QAT on a conventionally quantized model, since rank expansion hands the optimizer $\mu\,(m{+}n)/\max(m,n)$ times as many trainable ternary entries per layer, already initialized near target. Whether STE gradients behave through two chained ternary factors is untested.

Matching Q4\_K or Q5\_K bit-for-bit is not the objective; those are a familiar yardstick. The objective is a viable ternary LLM, where accuracy is set by $\mu$ rather than capped by the format, and this is best read as a step \emph{towards} highly accurate ternary quantization rather than a finished recipe.

\section{Conclusion}

Expanding the inner rank turns ternary representation from a fixed, lossy format with a hard accuracy ceiling into a family that provably converges to full precision as $k$ grows. This is what makes ternary LLMs viable: accuracy becomes a dial ($\mu$, continuous, with $\tau$ for sparsity) rather than a ceiling, so multiplication-free kernels and multiplication-free hardware can be pointed at any accuracy target, to our knowledge the only known way to get there. A simple greedy ALS already lands within $\sim$17--20\% of Q4\_K's bit efficiency at matched accuracy, with identified, unstacked levers of comparable total size, while keeping inference multiplication-free.

\appendix
\section{Monotone residual decrease and convergence}
\label{app:proof}

\begin{proposition}[Monotone decrease]
Let $R_i$ be the residual before extracting component $i$, and let $u, v$ be any nonzero ternary vectors produced by the fit. With the optimal scale $d_i = u^\top R_i v / (\|u\|^2 \|v\|^2)$,
\[
\|R_{i+1}\|_F^2 = \|R_i - d_i u v^\top\|_F^2 = \|R_i\|_F^2 - \frac{(u^\top R_i v)^2}{\|u\|_2^2\,\|v\|_2^2} \;\le\; \|R_i\|_F^2,
\]
with strict decrease whenever $u^\top R_i v \neq 0$.
\end{proposition}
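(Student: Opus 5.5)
The plan is to expand the Frobenius norm using the trace inner product $\langle X,Y\rangle_F=\operatorname{tr}(X^\top Y)$ and then substitute the chosen scale. The only facts needed are two rank-one identities. For any matrix $R$ and vectors $u,v$,
\[
\langle R,\,uv^\top\rangle_F = u^\top R v, \qquad \|uv^\top\|_F^2 = \|u\|_2^2\,\|v\|_2^2 .
\]
Since $u,v$ are nonzero (ternary with at least one nonzero entry, which the fallback in $T_\tau$ guarantees), the denominator $\|u\|_2^2\|v\|_2^2$ is a positive integer. Hence $d_i$ is well defined.

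Write $g=u^\top R_i v$ and $N=\|u\|_2^2\|v\|_2^2$. Expanding the square gives a quadratic in the scalar $d$:
\[
\|R_i-d\,uv^\top\|_F^2=\|R_i\|_F^2-2d\,g+d^2N .
\]
This quadratic is strictly convex, because $N>0$. Its unique minimizer is $d=g/N$, which is exactly the stated $d_i$; this confirms the least-squares claim in Sec.~\ref{sec:seq}. Substituting $d=g/N$ gives
\[
\|R_i\|_F^2-\frac{2g^2}{N}+\frac{g^2}{N}=\|R_i\|_F^2-\frac{g^2}{N}.
\]
This is the displayed identity. Because $g^2/N\ge 0$, the inequality $\|R_{i+1}\|_F^2\le\|R_i\|_F^2$ follows immediately. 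The decrease is strict exactly when $g\neq 0$, and if $g=0$ then $d_i=0$ and the residual is unchanged.

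There is no real obstacle here. The one point to state carefully is that the argument never uses how $u,v$ were produced: the alternating ternary fit and the thresholding play no role. The identity holds for \emph{any} nonzero pair, which is what the proposition asserts. The only hypothesis to check is nonvanishing of $u$ and $v$, which is where the "keep the largest entry" rule in $T_\tau$ enters.

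The convergence claim is a separate statement and is not needed for this proposition. For that claim I would need strict decrease bounded away from zero, and I would get it by showing that some ternary pair attains $|u^\top R v|^2/N\ge c\|R\|_F^2$. For example, take coordinate vectors $u=e_a$, $v=e_b$ at the largest-magnitude entry, which gives $c=1/(mn)$. That yields geometric decay $\|R_{i+1}\|_F^2\le(1-1/(mn))\|R_i\|_F^2$. The hard part there is ensuring the fitted pair does at least as well as this coordinate pair, for instance by comparing against it or initializing from it.
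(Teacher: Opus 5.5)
Your proof is correct and matches the paper's own argument: expand $\|R_i - d\,uv^\top\|_F^2$ as the quadratic $\|R_i\|_F^2 - 2d\,u^\top R_i v + d^2\|u\|_2^2\|v\|_2^2$ in $d$, then substitute the minimizer $d_i$. Your extra remarks are accurate but go beyond what this proposition needs: that the fallback in $T_\tau$ guarantees $u,v \neq 0$, and that convergence needs a separate comparison against the single-entry pair, which the paper obtains by augmenting the step.
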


\begin{proof}
Expand $\|R - d\,uv^\top\|_F^2 = \|R\|_F^2 - 2d\,u^\top R v + d^2 \|u\|^2\|v\|^2$ and substitute the minimizing $d$.
\end{proof}

\begin{proposition}[Convergence to arbitrary accuracy]
Augment each step to take whichever of (a) the ALS solution and (b) the best single-entry pair $u = e_{i^\star}, v = \sgn((R)_{i^\star j^\star})\,e_{j^\star}$ at the largest-magnitude residual entry gives the larger decrease. Then
\[
\|R_{k}\|_F^2 \;\le\; \Big(1 - \tfrac{1}{mn}\Big)^{k}\,\|A\|_F^2 \;\longrightarrow\; 0 .
\]
\end{proposition}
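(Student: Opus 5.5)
The plan is to reduce the statement to a one-step contraction
\[
\|R_{i+1}\|_F^2 \le \big(1 - \tfrac{1}{mn}\big)\|R_i\|_F^2
\]
and then iterate. By Proposition~1 (Monotone decrease), any nonzero ternary pair $(u,v)$ with the optimal scale reduces the squared residual by exactly $(u^\top R_i v)^2/(\|u\|_2^2\|v\|_2^2)$. The augmented step keeps whichever candidate, (a) or (b), gives the larger decrease. So it is enough to lower-bound the decrease of candidate (b) alone; the decrease of (a) is nonnegative by Proposition~1, and the maximum can only be larger.

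For candidate (b), set $u = e_{i^\star}$ and $v = \sgn(R_{i^\star j^\star})\,e_{j^\star}$, where $(i^\star,j^\star)$ maximizes $|(R_i)_{pq}|$. Both vectors are ternary and have unit norm. Also $u^\top R_i v = |(R_i)_{i^\star j^\star}|$, so the decrease equals $\max_{p,q} (R_i)_{pq}^2$. Since the maximum entry squared is at least the average entry squared,
\[
\max_{p,q}(R_i)_{pq}^2 \ge \frac{1}{mn}\sum_{p,q}(R_i)_{pq}^2 = \frac{\|R_i\|_F^2}{mn}.
\]
This gives the contraction. (If $R_i = 0$, the bound is trivial, and $v$ may be taken as $e_{j^\star}$ to keep it nonzero.)

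Induction from $R_0 = A$ then gives $\|R_k\|_F^2 \le (1-1/mn)^k\|A\|_F^2$. The factor $(1-1/mn)^k$ tends to $0$ because $0 \le 1-1/mn < 1$. In fact $(1-1/mn)^k \le e^{-k/(mn)}$, so $k \ge mn\ln(\|A\|_F^2/\varepsilon)$ components suffice for any target residual $\varepsilon$.

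I expect no step to be a genuine obstacle; the only real content is the averaging inequality for the best single entry. The point that needs care is bookkeeping. The residual must be indexed so that $R_0 = A$ and $R_k$ is the residual after $k$ extractions; the proposition's statement starts at $R_1$, so its indexing has to be reconciled with this. One should also note that with $u = e_{i^\star}$ and $v = \pm e_{j^\star}$ the optimal scale is $d = (R_i)_{i^\star j^\star}\,\sgn(R_{i^\star j^\star})$, so this step zeroes that residual entry exactly, consistent with the formula in Proposition~1.
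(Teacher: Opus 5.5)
Your proposal is correct and follows the paper's argument: candidate (b) removes $\max_{p,q}(R_i)_{pq}^2 \ge \|R_i\|_F^2/mn$, the augmented step removes at least that much, and iterating the contraction gives the geometric bound. Your extra bookkeeping improves on the paper's one-line proof in two places: you note that $R_k$ must mean the residual after $k$ extractions (under Proposition~1's convention $R_1 = A$, the stated bound would already fail at $k=1$), and you handle the case $R_i = 0$, where taking the sign of a zero entry would make $v$ the zero vector.
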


\begin{proof}
Option (b) with $\|u\| = \|v\| = 1$ removes $(u^\top R v)^2 = \max_{ij} R_{ij}^2 \ge \|R\|_F^2 / mn$. The augmented step removes at least this much, giving the geometric bound.
\end{proof}

\begin{corollary}
For every $\varepsilon > 0$ there is a finite inner rank $k$ with $\|A - B\diag(D)C\|_F < \varepsilon$: the decomposition reaches the bf16 weights arbitrarily closely, with error monotone in $k$. No fixed number of ternary planes has this property; the expansion is what removes the information bottleneck.
\end{corollary}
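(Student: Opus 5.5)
The corollary follows from the two propositions above, read in three pieces: existence of a finite $k$, monotonicity of the error in $k$, and the contrast with a fixed plane count.

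\emph{Existence.} I will run the augmented greedy step of Proposition 2 for $k$ components and read the result as a factorization. The $i$-th component contributes $d_i u_i v_i^\top$ with $u_i\in\{-1,0,1\}^m$ and $v_i\in\{-1,0,1\}^n$, both nonzero by construction; in option (b) they are a signed unit vector and a unit vector. Stacking $B_{:,i}=u_i$, $C_{i,:}=v_i^\top$ and $D_i=d_i$ gives $A-B\diag(D)C=R_k$ exactly, since the deflation updates telescope. Proposition 2 then gives $\|R_k\|_F^2\le(1-1/mn)^k\|A\|_F^2$. Given $\varepsilon>0$, I take any integer
\[
k>\frac{\log(\varepsilon^2/\|A\|_F^2)}{\log(1-1/mn)}
\]
when $\|A\|_F\ge\varepsilon$, and $k=0$ otherwise. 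This makes the bound strictly below $\varepsilon^2$. The degenerate case $mn=1$ is separate, because there $1-1/mn=0$ and the logarithm is undefined. In that case a single option-(b) step removes the lone entry exactly, so $R_1=0$.

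\emph{Monotonicity.} Proposition 1 applies to every step, since each chosen pair $(u,v)$ is nonzero and ternary and is paired with its optimal scale. So $\|R_{i+1}\|_F\le\|R_i\|_F$ for every $i$, and the error is nonincreasing in $k$. This is also why a finite $k$ is enough: once the bound is below $\varepsilon$, adding components cannot push it back above. One point needs care here. Both propositions describe the greedy sequential procedure, and the corollary's ``finite inner rank'' should be read as ``obtained by that procedure''. That is fine, because the statement only claims existence.

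\emph{Fixed plane count.} This is the only part not already settled by the propositions. I would make ``no fixed number of ternary planes'' precise as a representation of the form $\sum_{p=1}^K \diag(s_p)T_p$, or with group-wise scales, using ternary $T_p$ and a fixed $K$ and group structure. The argument is a dimension count. For fixed $K$ the set of representable matrices is a finite union, over the finitely many choices of $(T_1,\dots,T_K)$, of linear subspaces. Each subspace has dimension at most $K\cdot\#\text{scales}$, which is less than $mn$ whenever the number of scales is small compared with $mn$. A finite union of proper subspaces is closed and nowhere dense, so some $A$ stays at positive distance from it. That $A$ cannot be approximated below that distance, which contradicts the arbitrary-$\varepsilon$ property.

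I expect this last step to be the main obstacle, and it is a matter of definition rather than computation. The claim is only true when the number of real scale parameters per plane is bounded relative to $mn$. With one scale per entry the claim fails trivially. I would therefore state the hypothesis explicitly: the total scale count satisfies $K\cdot\#\text{scales}<mn$. Under that hypothesis the subspace-union argument closes the corollary.
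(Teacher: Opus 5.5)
Your existence and monotonicity arguments follow the paper's own route. The paper gives no separate proof of the corollary: it reads the finite $k$ off the geometric bound of Proposition~2 and monotonicity off Proposition~1. Your explicit threshold for $k$, the telescoping identity $A-B\diag(D)C=R_k$, and the $mn=1$ case just spell this out. Your convention that $R_k$ is the residual \emph{after} $k$ components is the one under which Proposition~2's exponent is right; under Proposition~1's convention $R_1=A$ it is off by one.

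The genuine difference is the clause about fixed plane counts, which the paper only asserts. Your finite-union-of-proper-subspaces count proves it, and shows more: the failure is generic, since the representable set is closed and Lebesgue-null. Your hypothesis is also sharp. With row-wise scales it reads $K<n$, and at $K=n$ the planes $T_p=\mathbf{1}e_p^\top$ give $\sum_p\diag(s_p)T_p=[s_1|\cdots|s_n]$, i.e.\ every matrix exactly.

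What your precise version buys is clarity about what is really being contrasted. The same count applies to ExTernD at any fixed $k<mn$, since for fixed $B,C$ the matrices $B\diag(D)C$ form a subspace of dimension at most $k$. So for generic $A$, driving $\varepsilon\to0$ eventually forces $k\ge mn$. This is consistent with the roughly $mn\log(\|A\|_F/\varepsilon)$ components that Proposition~2 calls for. Meanwhile, stacked planes with $K$ allowed to grow also reach any accuracy.

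So your proof is correct. It establishes the accurate form of the claim: fixed capacity cannot reach every accuracy, unbounded capacity can. It does not establish the separation between planes and factorization that the paper's wording suggests.
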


In practice the safeguard never activates, since the ALS decrease is far larger than the single-entry bound, and empirical convergence is much faster than geometric in $mn$ (99\% energy at $\mu \approx 2.5$--$3$).

\end{document}